\newtheorem{theorem}{Theorem}
\newtheorem{corollary}{Corollary}
\newtheorem{definition}{Definition}
\newtheorem{proposition}{Proposition}
\def\BibTeX{{\rm B\kern-.05em{\sc i\kern-.025em b}\kern-.08em
    T\kern-.1667em\lower.7ex\hbox{E}\kern-.125emX}}
\begin{document}
\newcommand{\SWITCH}[1]{\STATE \textbf{switch} (#1)}
\newcommand{\ENDSWITCH}{\STATE \textbf{end switch}}
\newcommand{\CASE}[1]{\STATE \textbf{case} #1\textbf{:} \begin{ALC@g}}
\newcommand{\ENDCASE}{\end{ALC@g}}
\newcommand{\CASELINE}[1]{\STATE \textbf{case} #1\textbf{:} }
\newcommand{\DEFAULT}{\STATE \textbf{default:} \begin{ALC@g}}
\newcommand{\ENDDEFAULT}{\end{ALC@g}}
\newcommand{\DEFAULTLINE}[1]{\STATE \textbf{default:} }
\renewcommand{\algorithmicrequire}{\textbf{Input:}}
\renewcommand{\algorithmicensure}{\textbf{Output:}}

\title{Certified Data Removal in Sum-Product Networks}

\author{Alexander Becker\\
\textit{Chair of Artificial Intelligence, Computer Science} \\
\textit{TU Dortmund University} \\
Dortmund, Germany \\
alexander2.becker@tu-dortmund.de
\And
Thomas Liebig \\
\textit{Chair of Artificial Intelligence, Computer Science} \\
\textit{TU Dortmund University} \\
Dortmund, Germany \\
thomas.liebig@tu-dortmund.de
}

\maketitle

\begin{abstract}
Data protection regulations like the GDPR or the California Consumer Privacy Act give users more control over the data that is collected about them. Deleting the collected data is often insufficient to guarantee data privacy since it is often used to train machine learning models, which can expose information about the training data. Thus, a guarantee that a trained model does not expose information about its training data is additionally needed. In this paper, we present \textsc{UnlearnSPN} -- an algorithm that removes the influence of single data points from a trained sum-product network and thereby allows fulfilling data privacy requirements on demand.
\end{abstract}

\keywords{Sum-Product Networks \and Data Privacy \and Unlearning \and Forgetting \and Trustworthy ML}

\thanks{This research has been funded by the Federal Ministry of Education and Research of Germany and the state of North-Rhine Westphalia as part of the  competence center for machine learning ML2R (01–S18038A) and the Lamarr-Institute for Machine Learning and Artificial Intelligence (LAMARR22B).}

\section{Introduction}
    Due to legal requirements like the European General Data Protection Regulation (GDPR), the California Consumer Privacy Act, and many others, users gain more control over their personal data collected daily. The \textit{right to be forgotten} is of particular importance, which states that collected data must be deleted when requested. Deleting data is often insufficient to provide real data privacy. This is especially the case if the data was used to train machine learning models since they might expose information about their training data via white-box or even black-box access. Motivated by this, the field of Machine Unlearning and Forgetting gained more and more attention. So far, research mostly focused on unlearning in deep neural networks \cite{bourtoule2021machine,golatkar2020eternal,golatkar2021mixed,graves2020amnesiac,guo2019certified} and linear models \cite{aldaghri2021coded,golatkar2020eternal,guo2019certified}, but also in random forests \cite{brophy2021machine} and clustering algorithms \cite{ginart2019making}. While some of those unlearning algorithms were only evaluated empirically, some provide strong privacy guarantees. The privacy term that is usually used in the domain of machine unlearning is that of certified removal \cite{guo2019certified}. Similar to differential privacy \cite{dwork2014algorithmic}, certified removal compares unlearning results with retraining without the target data point. Simply put, if the result of unlearning could also be obtained via retraining with a similarly high chance, then the unlearning algorithm guarantees a certain degree of privacy since the results cannot be told apart with high probability.

In this work, we will present an unlearning algorithm for sum-product networks, which is the first to our best knowledge. In contrast to Crypto-SPN \cite{treiber2020cryptospn}, which guarantees privacy-preserving inference, our unlearning algorithm preserves privacy on the model level. This means that no information about the sensitive data points can be gained, even if an attacker has white-box access to the model.

This work is structured as follows. First, we will provide all necessary foundations on sum-product networks (\Cref{sec:spn}) and certified removal (\Cref{sec:cr}). This includes sum-product networks, the training algorithm \textsc{LearnSPN}, and $\epsilon$-certified removal. In \Cref{sec:unlearnspn}, we then present some modifications of \textsc{LearnSPN} that are necessary to obtain a model that allows for certified removal, followed by our unlearning algorithm \textsc{UnlearnSPN}. Finally, we evaluate the runtime of \textsc{UnlearnSPN} experimentally in \Cref{sec:experiments} and give some directions for future research in \Cref{sec:conclusion}.

The contributions of this work can be summarized as follows:
\begin{itemize}
    \item We present a modified version of \textsc{LearnSPN} that produces sum-product networks that allow for certified data removal.
    \item We present the first unlearning algorithm for sum-product networks -- \textsc{UnlearnSPN}.
    \item We prove that \textsc{UnlearnSPN} is a 0-certified removal algorithm and, therefore, perfectly removes the influence of target data points.
    \item In our experiments, we show that \textsc{UnlearnSPN} provides a speed-up of 10-58\% compared to retraining. At the same time, the modified training algorithm only slightly increases the initial training duration.
\end{itemize}

\section{Sum-Product Networks}\label{sec:spn}
    Sum-product networks (SPNs) form a class of probabilistic graphical models that gained popularity over the last few years and were first presented by Poon and Domingos in 2011 \cite{poon2011sum}. The main idea is to represent a probability distribution employing mixing and factorizing univariate distributions. This allows the representation of more complex distributions by only using sums and products, directly leading to the definition of SPNs.

\begin{definition}[Sum-Product Network \cite{poon2011sum}]
	Let $X$ be a dataset defined over variables $V$. An SPN $\Phi$ is a rooted acyclic directed tripartite graph, where
	\begin{itemize}
		\item every leaf node represents a univariate probability distribution $P(X_{|v})$ with $v \in V$,
		\item each sum node $s$ represents a mixture of its children $P_s(X_{|scope(s)}) = \sum_{c \in children(s)} \omega_{s, c}P_c(X_{|scope(c)})$,
		\item each product node $p$ represents a factorization of its children $P_p(X_{|scope(p)}) = \prod_{c \in children(p)} P_c(X_{|scope(c)})$.
	\end{itemize}
	$\omega_{s, c}$ is a non-negative weight assigned to the edge $(s, c)$, and $\sum_{c \in children(s)} \omega_{s, c} = 1$ for each sum node $s$.
\end{definition}

The scope of a node is defined as the subset of variables the node resp. the corresponding distribution argues about. With $X_{|V'}$ we denote the dataset $X$, where only the variables $V' \subseteq V$ are considered.

The sum nodes in an SPN represent mixture models, while the product nodes represent factorizations. In order to obtain meaningful mixtures and factorizations, the scopes of the corresponding distributions must be equal for sum nodes and disjunct for product nodes. From a data perspective this means that the dataset $X$ is split into $k$ partition sets $X_1, \dots, X_k$ at sum nodes, and sliced into $m$ sets $X_{|V_1}, \dots, X_{|V_m}$ at product nodes with $\bigcap_{i \in [1, m]} V_i = \emptyset$ and $\bigcup_{i \in [1, m]} V_i = V$.
The root of an SPN then represents the complete joint distributions over all variables in $V$ and with respect to the whole dataset $X$.


    A common way to learn both the structure and the parameters of an SPN is the \textsc{LearnSPN} algorithm (\Cref{alg:learnspn}) \cite{gens2013learning}.
Here, we assume that all leaf nodes must represent univariate distributions and that the dataset will not be split any further if it is only defined over a single variable. Also, note that the clustering algorithm and the independency analysis used in \textsc{LearnSPN} can be chosen freely. There are five operations that are used to create nodes in an SPN.

\begin{algorithm}
    \begin{algorithmic}[1]
        \REQUIRE{Dataset $X$ defined over variables $V$; threshold $t$}
    	\ENSURE{SPN $\Phi$}
    	\IF{$|V| = 1$} \RETURN $\textsc{CreateLeaf}(X)$ \ENDIF
    	\IF{$\exists v \in V. \sigma^2(X_v) = 0$}
    	    \IF{$\forall v \in V. \sigma^2(X_v) = 0$} \RETURN $\textsc{NaiveFactorization}(X)$
    	    \ELSE \RETURN $\textsc{SplitUninformativeVariables}(X)$
    	    \ENDIF
        \ENDIF
        \IF{$|X| \leq t \vee (\neg clusters \wedge \neg independencies)$} \RETURN $\textsc{NaiveFactorization}(X)$ \ENDIF
        \IF{$\neg independencies$} \RETURN $\textsc{SplitData}(X)$
        \ELSE \RETURN $\textsc{SplitVariables}(X)$
        \ENDIF
    \end{algorithmic}
    \caption{\textsc{LearnSPN}}
    \label{alg:learnspn}
\end{algorithm}

\textsc{CreateLeaf} estimates an univariate distribution from $X$. The kind of distribution depends on the variable itself. Note that \textsc{CreateLeaf} will only be called, if the dataset $X$ is defined over a single variable, i.e. $|V| = 1$.

The \textsc{NaiveFactorization} creates a product node as the SPN's root and a leaf node for each variable in $V$.
There are two cases in which a naive factorization is performed. First, if all variables are uninformative, $\forall v \in V. \sigma(X_v) = 0$. Second, if there is still more than one variable present, and none is uninformative, but the number of data points $X$ is smaller than $t$. The threshold $t$ guarantees that the number of data points used to estimate the univariate distributions in the leaf nodes is large enough for reasonable estimation.

Similarly, \textsc{SplitUninformativeVariables} creates a product node as the root and a leaf node for each uninformative variable, while $X$ with respect to the remaining informative variables will be processed recursively.

\textsc{SplitData} first performs a clustering algorithm, which will result in a partition of $X$. Then, a sum node will be created as the root, and each data subset in the partition will be processed recursively. The edge weights are defined as the ratio between the number of data points in the clusters and the total number of data points.

\textsc{SplitVariables} performs a pairwise independence analysis of all variables. Afterward, the coefficients from the analysis are used to build an adjacency matrix indicating which variables are independent resp. dependent. The variables are split into connected components using the adjacency matrix. For each resulting variable subsets, the dataset $X$ is processed recursively.

Since we will often argue about the operation decision process of \textsc{LearnSPN} in the following, we illustrate it in \Cref{fig:learnspn} for easier comprehension.

\begin{figure}[h!]
    \centering
    \resizebox{0.48\textwidth}{!}{\tikzset{every picture/.style={line width=0.75pt}} 

\begin{tikzpicture}[x=0.75pt,y=0.75pt,yscale=-1,xscale=1]

\draw   (200.98,48.99) .. controls (200.97,43.46) and (205.43,38.98) .. (210.95,38.97) .. controls (216.47,38.96) and (220.96,43.42) .. (220.97,48.94) .. controls (220.98,54.46) and (216.51,58.95) .. (210.99,58.96) .. controls (205.47,58.97) and (200.99,54.51) .. (200.98,48.99) -- cycle ;
\draw   (239.98,99.99) .. controls (239.97,94.46) and (244.43,89.98) .. (249.95,89.97) .. controls (255.47,89.96) and (259.96,94.42) .. (259.97,99.94) .. controls (259.98,105.46) and (255.51,109.95) .. (249.99,109.96) .. controls (244.47,109.97) and (239.99,105.51) .. (239.98,99.99) -- cycle ;
\draw   (180.98,193.99) .. controls (180.97,188.46) and (185.43,183.98) .. (190.95,183.97) .. controls (196.47,183.96) and (200.96,188.42) .. (200.97,193.94) .. controls (200.98,199.46) and (196.51,203.95) .. (190.99,203.96) .. controls (185.47,203.97) and (180.99,199.51) .. (180.98,193.99) -- cycle ;
\draw   (428.98,173.99) .. controls (428.97,168.46) and (433.43,163.98) .. (438.95,163.97) .. controls (444.47,163.96) and (448.96,168.42) .. (448.97,173.94) .. controls (448.98,179.46) and (444.51,183.95) .. (438.99,183.96) .. controls (433.47,183.97) and (428.99,179.51) .. (428.98,173.99) -- cycle ;
\draw   (464.98,264.99) .. controls (464.97,259.46) and (469.43,254.98) .. (474.95,254.97) .. controls (480.47,254.96) and (484.96,259.42) .. (484.97,264.94) .. controls (484.98,270.46) and (480.51,274.95) .. (474.99,274.96) .. controls (469.47,274.97) and (464.99,270.51) .. (464.98,264.99) -- cycle ;
\draw    (210.99,58.96) -- (248.39,88.72) ;
\draw [shift={(249.95,89.97)}, rotate = 218.52] [color={rgb, 255:red, 0; green, 0; blue, 0 }  ][line width=0.75]    (10.93,-3.29) .. controls (6.95,-1.4) and (3.31,-0.3) .. (0,0) .. controls (3.31,0.3) and (6.95,1.4) .. (10.93,3.29)   ;
\draw    (249.99,109.96) -- (192.2,182.41) ;
\draw [shift={(190.95,183.97)}, rotate = 308.58] [color={rgb, 255:red, 0; green, 0; blue, 0 }  ][line width=0.75]    (10.93,-3.29) .. controls (6.95,-1.4) and (3.31,-0.3) .. (0,0) .. controls (3.31,0.3) and (6.95,1.4) .. (10.93,3.29)   ;
\draw    (249.99,109.96) -- (437.03,163.42) ;
\draw [shift={(438.95,163.97)}, rotate = 195.95] [color={rgb, 255:red, 0; green, 0; blue, 0 }  ][line width=0.75]    (10.93,-3.29) .. controls (6.95,-1.4) and (3.31,-0.3) .. (0,0) .. controls (3.31,0.3) and (6.95,1.4) .. (10.93,3.29)   ;
\draw    (438.99,183.96) -- (474.05,253.18) ;
\draw [shift={(474.95,254.97)}, rotate = 243.14] [color={rgb, 255:red, 0; green, 0; blue, 0 }  ][line width=0.75]    (10.93,-3.29) .. controls (6.95,-1.4) and (3.31,-0.3) .. (0,0) .. controls (3.31,0.3) and (6.95,1.4) .. (10.93,3.29)   ;
\draw    (210.99,58.96) -- (154.49,109.91) ;
\draw [shift={(153,111.25)}, rotate = 317.96] [color={rgb, 255:red, 0; green, 0; blue, 0 }  ][line width=0.75]    (10.93,-3.29) .. controls (6.95,-1.4) and (3.31,-0.3) .. (0,0) .. controls (3.31,0.3) and (6.95,1.4) .. (10.93,3.29)   ;
\draw    (190.99,203.96) -- (122.68,247.92) ;
\draw [shift={(121,249)}, rotate = 327.24] [color={rgb, 255:red, 0; green, 0; blue, 0 }  ][line width=0.75]    (10.93,-3.29) .. controls (6.95,-1.4) and (3.31,-0.3) .. (0,0) .. controls (3.31,0.3) and (6.95,1.4) .. (10.93,3.29)   ;
\draw    (190.99,203.96) -- (219.51,318.06) ;
\draw [shift={(220,320)}, rotate = 255.97] [color={rgb, 255:red, 0; green, 0; blue, 0 }  ][line width=0.75]    (10.93,-3.29) .. controls (6.95,-1.4) and (3.31,-0.3) .. (0,0) .. controls (3.31,0.3) and (6.95,1.4) .. (10.93,3.29)   ;
\draw    (438.99,183.96) -- (352.63,245.84) ;
\draw [shift={(351,247)}, rotate = 324.38] [color={rgb, 255:red, 0; green, 0; blue, 0 }  ][line width=0.75]    (10.93,-3.29) .. controls (6.95,-1.4) and (3.31,-0.3) .. (0,0) .. controls (3.31,0.3) and (6.95,1.4) .. (10.93,3.29)   ;
\draw    (474.99,274.96) -- (403.48,339.66) ;
\draw [shift={(402,341)}, rotate = 317.86] [color={rgb, 255:red, 0; green, 0; blue, 0 }  ][line width=0.75]    (10.93,-3.29) .. controls (6.95,-1.4) and (3.31,-0.3) .. (0,0) .. controls (3.31,0.3) and (6.95,1.4) .. (10.93,3.29)   ;
\draw    (474.99,274.96) -- (528.7,337.48) ;
\draw [shift={(530,339)}, rotate = 229.34] [color={rgb, 255:red, 0; green, 0; blue, 0 }  ][line width=0.75]    (10.93,-3.29) .. controls (6.95,-1.4) and (3.31,-0.3) .. (0,0) .. controls (3.31,0.3) and (6.95,1.4) .. (10.93,3.29)   ;

\draw (230,26.4) node [anchor=north west][inner sep=0.75pt]    {$|V|=1$};
\draw (34,253) node [anchor=north west][inner sep=0.75pt]   [align=left] {\textsc{NaiveFactorization}};
\draw (130,323) node [anchor=north west][inner sep=0.75pt]   [align=left] {\textsc{SplitUninformativeVariables}};
\draw (350.5,346.5) node [anchor=north west][inner sep=0.75pt]   [align=left] {\textsc{SplitData}};
\draw (505,348) node [anchor=north west][inner sep=0.75pt]   [align=left] {\textsc{SplitVariables}};
\draw (279,81.4) node [anchor=north west][inner sep=0.75pt]    {$\exists v\in V.\ \sigma ^{2}( X_{v}) =0$};
\draw (168,64) node [anchor=north west][inner sep=0.75pt]   [align=left] {1};
\draw (207,130) node [anchor=north west][inner sep=0.75pt]   [align=left] {1};
\draw (146,200) node [anchor=north west][inner sep=0.75pt]   [align=left] {1};
\draw (378,195) node [anchor=north west][inner sep=0.75pt]   [align=left] {1};
\draw (426,286) node [anchor=north west][inner sep=0.75pt]   [align=left] {1};
\draw (231,54) node [anchor=north west][inner sep=0.75pt]   [align=left] {0};
\draw (349,116) node [anchor=north west][inner sep=0.75pt]   [align=left] {0};
\draw (464,210) node [anchor=north west][inner sep=0.75pt]   [align=left] {0};
\draw (512,296) node [anchor=north west][inner sep=0.75pt]   [align=left] {0};
\draw (209,184.4) node [anchor=north west][inner sep=0.75pt]    {$\forall v\in V.\ \sigma ^{2}( X_{v}) =0$};
\draw (211,239) node [anchor=north west][inner sep=0.75pt]   [align=left] {0};
\draw (456,158.4) node [anchor=north west][inner sep=0.75pt]    {$|X| \leq t \vee (\neg independencies \wedge \neg clusters)$};
\draw (491,254.4) node [anchor=north west][inner sep=0.75pt]    {$\neg independencies$};
\draw (95,115) node [anchor=north west][inner sep=0.75pt]   [align=left] {\textsc{CreateLeaf}};
\draw (282,252) node [anchor=north west][inner sep=0.75pt]   [align=left] {\textsc{NaiveFactorization}};

\end{tikzpicture}}
    \caption{Operation decision process in \textsc{LearnSPN} (\Cref{alg:learnspn}).}
    \label{fig:learnspn}
\end{figure}
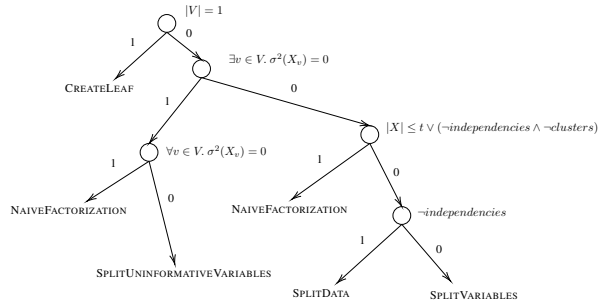

\section{Certified Removal}\label{sec:cr}
    In the domain of machine unlearning, privacy is usually described in the sense of certified removal (\Cref{def:cr}) as presented by Guo et al. \cite{guo2019certified}.

\begin{definition}[$\epsilon$-Certified Removal \cite{guo2019certified}]\label{def:cr}
    Let $\mathcal{D}$ be a data space, $\mathcal{H}$ a hypothesis space, $\mathcal{A} :\mathcal{P}(\mathcal{D}) \rightarrow \mathcal{H}$ a learning algorithm and $\mathcal{U} :\mathcal{H} \times \mathcal{D} \rightarrow \mathcal{H}$ an unlearning algorithm. $\mathcal{U}$ is called \textbf{$\epsilon$-certified removal} ($\epsilon$-CR), if and only if
		\begin{equation*}
			e^{-\epsilon} \leq \frac{P(\mathcal{U}(\mathcal{A}(X), x) \in \mathcal{T})}{P(\mathcal{A}(X \setminus \{x\}) \in \mathcal{T})} \leq e^\epsilon
		\end{equation*}
		holds $\forall \mathcal{T} \subseteq \mathcal{H}, X \subseteq \mathcal{D}, x \in X$.
\end{definition}

The intuition behind $\epsilon$-CR is that if the result of unlearning could also likely be obtained by retraining, then the unlearning algorithm successfully removes the influence of the target data point from the model. Depending on the model class, it is neither necessary nor expected that unlearning yields the exact same result as retraining. Therefore, $\epsilon$-CR argues about arbitrary hypothesis sub-spaces $\mathcal{T} \subseteq \mathcal{H}$ in which the results of unlearning and retraining may fall. In order to guarantee $\epsilon$-CR, the ratio of the chances for both results falling into $\mathcal{T}$ must be limited by $e^{-\epsilon}$ and $e^\epsilon$. The closer $\epsilon$ is to 0, the higher the privacy guarantees.


\section{Remove Data from SPNs}\label{sec:unlearnspn}
    In the following, we will describe which additions to the \textsc{LearnSPN} algorithm are necessary to enable the resulting SPN for certified data removal. Afterward, we present the revision function that maps an operation chosen during training to the operation that would have been chosen if a specific data point had not been present in the training data.
    With this being an essential part of unlearning, we finally present \textsc{UnlearnSPN} -- a 0-CR algorithm that perfectly removes the influence of a data point.

    \subsection{Certified Removal Enabled SPNs}
    In order to obtain a certified removal enabled SPN, some small additions to \textsc{LearnSPN} must be made. During training, we define a state for each node that contains all the information necessary for efficiently performing unlearning afterward. More precisely, for each node $n$ we define a state as an 8-tuple $s = (sc$, $op$, $X$, $N$, $independencies$, $clusters$, $exist\_uninformative$, $ all\_uninformative)$,
where $sc$ is the scope of $n$, $op$ the operation chosen to create the sub-SPN rooted at $n$, $X$ the dataset of size $N$ used for training, $independencies$ and $clusters$ indicate whether there exist independent variables or data clusters, and $exist\_uninformative$ and $all\_uninformative$ indicate whether there exist uninformative variables or if all variables are uninformative. For sum and product nodes, we additionally store an instance of the clustering algorithm and the independency analysis, respectively. For leaf nodes, we also store additional information that will allow us to update the univariate distribution quickly.
Storing a state for each node in an SPN yields an additional memory consumption of $\mathcal{O}(|V| + |X|)$ per node. The additional memory consumption induced by storing the clustering and the independency analysis depends on the choice of the splitting algorithms. Since we keep track of the operations used during training, it is important that the SPN must not be pruned afterward; otherwise, the operation used for creating a sub-SPN becomes ambiguous.
Except for the above additions, no further changes to \textsc{LearnSPN} are necessary.

Also, note that even though arbitrary clustering algorithms and independency analyses can be used, it is highly recommended to use algorithms that allow 0-certified data removal or can be updated in a reasonable amount of time to reduce the cost of removing data from the SPN.
    \subsection{Revising the Choice of Operation}
    Next, we introduce the revision function $rev$ (\Cref{def:revision}), which will be an essential part of our unlearning algorithm.

\allowdisplaybreaks

\begin{definition}\label{def:revision}
    Let $X$ be a dataset defined over variables $V$. Let $x \in X$ be a data point and $X' = X \setminus \{x\}$. Let \textsc{CreateLeaf} (CL), \textsc{NaiveFactorization} (NF), \textsc{SplitUninformativeVariables} (SU), \textsc{SplitData} (SD) and \textsc{SplitVariables} (SV) be the operations used in \textsc{LearnSPN} (\Cref{alg:learnspn}). 
    The revision function $rev$ takes an operation and maps it to another operation in the following manner:
    \begin{align*}
    	&\text{CL} \mapsto \text{CL}\\
    	&\text{NF} \mapsto \begin{cases}
    		\text{NF}, &\text{if } \forall v \in V. \sigma^2(X_v) = 0 \\
    		\text{NF}, &\text{if } \forall v \in V. \sigma^2(X'_v) = 0 \\
    		\text{SU}, &\text{if } \exists v \in V.\sigma^2(X'_v) = 0 ~\wedge \not\exists v \in V.\sigma^2(X_v) = 0 \\
    		\text{NF}, &\text{if } |X| \leq t \vee (\neg clusters \wedge \neg independencies) \\
    		\text{SD}, &\text{if } \neg independencies \\
    		\text{SV}, &\text{otherwise}
    	\end{cases} \\
    	&\text{SD} \mapsto \begin{cases}
    		\text{SU}, &\text{if } \exists v \in V. \sigma^2(X'_v) = 0 ~ \wedge \not\forall v \in V. \sigma^2(X'_v) = 0\\
    		\text{NF}, &\text{if } \forall v \in V. \sigma^2(X'_v) = 0 \\
    		\text{NF}, &\text{if} \not\exists v \in V. \sigma^2(X'_v) = 0 ~ \wedge |X'| \leq t\\
    		\text{NF}, &\text{if} \not\exists v \in V. \sigma^2(X'_v) = 0 ~ \wedge \neg clusters\\
    		&\wedge \neg independencies\\
    		\text{SV}, &\text{if} \not\exists v \in V.\sigma^2(X'_v) = 0~ \wedge |X'| > t \\
    		&\wedge independencies \\
    		\text{SD}, &\text{otherwise} \\
    	\end{cases}\\
    	&\text{SV} \mapsto \begin{cases}
    		\text{SU}, &\text{if } \exists v \in V. \sigma^2(X'_v) = 0 ~ \wedge \not\forall v \in V. \sigma^2(X'_v) = 0\\
    		\text{NF}, &\text{if } \forall v \in V. \sigma^2(X'_v) = 0 \\
    		\text{NF}, &\text{if} \not\exists v \in V. \sigma^2(X'_v) = 0 ~ \wedge |X'| \leq t\\
    		\text{NF}, &\text{if} \not\exists v \in V. \sigma^2(X'_v) = 0 ~ \wedge \neg clusters\\
    		&\wedge \neg independencies \\
    		\text{SD}, &\text{if} \not\exists v \in V.\sigma^2(X'_v) = 0~ \wedge |X'| > t \\
    		&\wedge ~ clusters \wedge \neg independencies \\
    		\text{SV}, &\text{otherwise} \\
    	\end{cases}\\
    	&\text{SU} \mapsto \begin{cases}
    	    \text{SU}, &\text{if } \exists v \in V. \sigma^2(X'_v) > 0 \\
    		\text{NF}, &\text{otherwise} \\
    	\end{cases}
    \end{align*}
    $\sigma^2$ denotes the variance and $t$ is the data threshold used in \textsc{LearnSPN}.
\end{definition}

Note that the order of cases in \Cref{def:revision} matters. For each case, we expect that the conditions for all cases above are falsified. In \Cref{th:revision}, we show that $rev$ is sound and complete, i.e., it exactly corresponds to the change in the chosen operation in \textsc{LearnSPN} when a certain data point would not have been present during training.

\begin{theorem}\label{th:revision}
    Let $X$ be a dataset defined over variables $V$, $x \in X$ a data point, $X' = X \setminus \{x\}$ and $s$ be a random seed.
    The revision function $rev$ maps an operation $op_{old}$ to $op_{new}$ such that
    \begin{equation}\label{eq:th_revision}
        \begin{aligned}
            rev(op_{old}) = op_{new} \Leftrightarrow ~ \textsc{LearnSPN}(X; s) \text{ chooses } op_{old} \wedge\textsc{LearnSPN}(X'; s) \text{ chooses } op_{new}.
        \end{aligned}
    \end{equation}
\end{theorem}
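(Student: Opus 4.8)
The plan is to prove the equivalence \eqref{eq:th_revision} by an exhaustive case analysis that re-runs the decision tree of \textsc{LearnSPN} (\Cref{fig:learnspn}) on the reduced dataset $X'$. First I would fix the data context $(X,x,s)$ and pin down the logical reading of the statement: because the seed $s$ is fixed, both $\textsc{LearnSPN}(X;s)$ and $\textsc{LearnSPN}(X';s)$ are deterministic and each selects a unique operation. Writing $op_{old}$ for the operation actually recorded in the node's state (the one chosen on $X$) and $op_{new}$ for the one chosen on $X'$, the conjunction on the right-hand side of \eqref{eq:th_revision} just asserts that these two unique choices equal $op_{old}$ and $op_{new}$. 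Both directions then collapse to the single identity $rev(op_{old})=op_{new}$ for the genuinely chosen operations: the $\Leftarrow$ direction is exactly this identity, while $\Rightarrow$ follows from it together with the fact that $rev$ is a function and the two choices are unique. So the real work is to show that the branch of $rev$ indexed by the operation chosen on $X$ evaluates to the operation $\textsc{LearnSPN}$ chooses on $X'$.

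The engine of the argument is a monotonicity lemma I would prove first: deleting a single point cannot raise a zero variance, i.e. $\sigma^2(X_v)=0 \Rightarrow \sigma^2(X'_v)=0$ for every $v$ (all equal entries stay equal after dropping a row). Hence an uninformative variable stays uninformative, only informative-to-uninformative transitions occur, and $exist\_uninformative$ and $all\_uninformative$ can only grow. Combined with the two remaining structural invariants — the variable set $V$ is unchanged (so $|V|=1$ is stable, immediately giving $\text{CL}\mapsto\text{CL}$) and $|X'|=|X|-1$ (so a threshold test can only cross from $>t$ to $\le t$, never back) — this sharply prunes the reachable transitions. In particular the lemma forces $\text{SU}$ to map only into $\{\text{SU},\text{NF}\}$: once some variable is uninformative on $X$ it remains so on $X'$, so $\textsc{LearnSPN}(X';s)$ re-enters the uninformative block and the choice degenerates to ``all uninformative or not,'' never reaching the split tests.

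With these tools I would walk through the five branches $\{\text{CL},\text{NF},\text{SU},\text{SD},\text{SV}\}$ in turn. For each I first record the conjunction of conditions characterising ``$\textsc{LearnSPN}(X;s)$ chose $op_{old}$'' by reading off the path to that leaf in \Cref{fig:learnspn}, fixing which variance, size, and split predicates are known on $X$. I then re-run the same tree on $X'$, using monotonicity and the cardinality fact to translate $X$-predicates into $X'$-predicates, and check that the \emph{first} satisfied case of the corresponding $rev$-branch (the case order being significant) names exactly the leaf reached on $X'$. Throughout I read the split predicates $clusters$ and $independencies$ as the values the subroutines produce on $X'$ under the seed $s$: these are the only predicates that can change \emph{non-monotonically} — a cluster structure can be destroyed, or an independence revealed, by removing one point — which is precisely why the state stores updatable analysis instances rather than reusing the $X$-values.

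The main obstacle I anticipate is exactly this bookkeeping around $clusters$ and $independencies$: one must verify that the ordered cases of each branch partition the $X'$-possibilities with neither gap nor overlap, and that each size and split predicate is evaluated on the correct dataset. The delicate points are the boundary of the threshold test (e.g. $|X|=t+1$, where $X'$ falls to the threshold and $\text{NF}$ must be selected before the split predicates are even consulted, forcing the size condition in $rev$ to be read on $X'$) and the cases where a split predicate flips: the $\text{SD}$ and $\text{SV}$ branches must route a configuration with a newly-revealed independence to $\text{SV}$ and one with a newly-destroyed clustering to $\text{NF}$, matching the precedence of the independence test over the clustering test inside $\textsc{LearnSPN}$. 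Once each of the finitely many ordered cases is checked against its decision-tree path, soundness and completeness follow simultaneously, establishing \eqref{eq:th_revision}.
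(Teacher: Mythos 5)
Your proposal is correct and follows essentially the same route as the paper: fix the seed to make \textsc{LearnSPN} deterministic, establish that zero variance, the variable set, and the dataset cardinality behave monotonically under single-point removal, and then exhaustively check each of the five operation branches of $rev$ against the decision tree of \Cref{fig:learnspn}, treating $clusters$ and $independencies$ as the only non-monotone predicates. Your upfront framing of the variance-preservation fact as a lemma and your explicit remark that the threshold condition must be read on $X'$ at the boundary $|X|=t+1$ are, if anything, slightly more careful than the paper's inline treatment, but the decomposition and the key observations are the same.
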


\begin{proof}[Proof of \Cref{th:revision}]
    Since \textsc{LearnSPN} is a randomized algorithm, we assume \textsc{LearnSPN}($X$; $s$) and \textsc{LeanrSPN}($X'$; $s$) to be seeded with the same random seed $s$. This way \textsc{LearnSPN} becomes a deterministic algorithm. For reasons of simplicity we omit the random seed $s$ in the further course.
    In the following, we prove the equivalence stated in \Cref{th:revision} for each input operation separately. To follow the reasoning in this proof more easily, we refer to the operation decision process illustrated in \Cref{fig:learnspn}.
    

    \textbf{Case \textsc{Create Leaf} (CL):}
    \textsc{LearnSPN}($X$) chooses \textsc{CL}, if and only if $|V| = 1$. Removing an arbitrary data point $x$ from $X$ has no influence on the number of variables $|V|$. Thus, $\textsc{LearnSPN}(X) \text{ chooses } \textsc{CL} \Leftrightarrow ~ |V_X| = 1 \Rightarrow |V_{X'}| = 1 \Leftrightarrow \textsc{LearnSPN}(X')\text{ chooses }\textsc{CL}.$
    If $op_{old} = \textsc{CL}$, then removing a data point $x$ from $X$ does not change the chosen operation, i.e. $op_{old} = \textsc{CL} = op_{new}$. Since $rev(\textsc{CL}) = \textsc{CL}$, the equivalence in \Cref{eq:th_revision} holds.


    \textbf{Case \textsc{NaiveFactorization} (NF):}
    There are two cases in which \textsc{LearnSPN}$(X)$ chooses \textsc{NF} (see \Cref{fig:learnspn}): 1) $|V| > 1 \wedge \forall v \in V. \sigma^2(X_v) = 0$, and 2) $|V| > 1 \wedge \not\exists v \in V. \sigma^2(X_v) = 0 \wedge |X| \leq t \vee ~ (\neg clusters \wedge \neg independencies)$.

    \textbf{If \textsc{NF} was chosen due to case 1):}
    The number of variables is unaffected by removing $x$ from $X$, i.e. $|V_X| > 1 \Rightarrow |V_{X'} > 1|$.
    We know that all points must be equal if the variance is 0: $\forall v \in V. \sigma^2(X_v) = 0 \Leftrightarrow \forall v \in V. \frac{1}{|X|} \sum_{p \in X_v} (p - \mu_v)^2 = 0
    \Leftrightarrow \forall v \in V. \forall p \in X_v. (p - \mu_v)^2 = 0
    \Leftrightarrow \forall v \in V. \forall p, q \in X_v. p = q$,
    where $\mu_v$ is the mean value of $X_v$. After removing $x$ from $X$, we know that the remaining data points are still all equal. Therefore, $\forall v \in V. \sigma^2(X_v) = 0 \Leftrightarrow \forall v \in V. \forall p, q \in X_v. p = q \Rightarrow \forall v \in V. \forall p, q \in X'_v. p = q \Leftrightarrow \forall v \in V. \sigma^2(X'_v) = 0$.
    So, if 1) holds, then both \textsc{LearnSPN}($X$) and \textsc{LearnSPN}($X'$) choose \textsc{NF}.
    This mapping is captured by the first case of $rev$(\textsc{NF}) (see \Cref{def:revision}).

    \textbf{If \textsc{NF} was chosen due to case 2):}
    Again, we know that the number of variables $|V|$ is unaffected. This is the case if all data points except for $x$ are identical w.r.t. a non-empty variable subset $V' \subseteq V$.
    
    If $V' = V$, then $\not\exists v \in V. \sigma^2(X_v) = 0, \text{but }\forall v \in V'. \sigma^2(X'_v) = 0$.
    Nonetheless, \textsc{LearnSPN}($X'$) would choose \textsc{NF} since falsifying $\not\exists v \in V. \sigma^2(X_v) = 0$ is equivalent to accepting the second condition of 1) for \textsc{LearnSPN}($X'$). So, if 2) holds for $X$ and $\forall v \in V. \sigma^2(X'_v) = 0$ both \textsc{LearnSPN}($X$) and \textsc{LearnSPN}($X'$) choose \textsc{NF}. This mapping is captured by the second case of $rev$(\textsc{NF}).
    
    If $V' \neq V$, then there exist some variables that become uninformative when removing $x$: $\not\exists v \in V. \sigma^2(X_v) = 0, \text{but }\exists v \in V'. \sigma^2(X'_v) = 0$.
    However, there are also variables that are still informative since $V' \neq V$. In this case \textsc{LearnSPN}($X'$) would choose to split the uninformative variables rather than performing a naive factorization (see \Cref{fig:learnspn}). Note that this decision is independent of $|X| \leq t \vee (\neg clusters \wedge \neg independencies)$. This mapping is captured by the third case of $rev$(\textsc{NF}).

    Finally, it is also possible that all variables stay informative. In this case, the chosen operation only depends on $|X| \leq t$ and the existence of clusters and independencies. If $|X| \leq t \vee ~ (\neg clusters \wedge \neg independencies)$ holds due to $|X| \leq t$, then we know that it also holds after removing $x$ since $|X'| < |X| \leq t$. The chosen operation would still be \textsc{NF}. This is also the case if $|X| = t+1$. If it holds due to $\neg clusters \wedge \neg independencies$, removing $x$ might induce the existence of clusters or independent variables. Therefore, instead of a naive factorization, \textsc{SD} or \textsc{SV} is chosen by \textsc{LearnSPN}. The three cases above are captured by the last three cases of $rev(\textsc{NF})$.
    
    Since $rev$(\textsc{NF}) correctly captures all of the above four cases, we know that the equivalence stated in \Cref{eq:th_revision} holds in this case.
    

    \textbf{Case \textsc{SplitUninformativeVariables} (SU):}
    \textsc{LearnSPN}($X$) chooses to split uninformative variables, if and only if $|V| > 1 \wedge \exists v \in V. \sigma^2(X_v) = 0 ~ \wedge \not\forall v \in V. \sigma^2(X_v) = 0$.
    Again the number of variables is unaffected. Furthermore, we know that  $\exists v \in V. \sigma^2(X_v) = 0$ also holds after removing $x$, since for those variables with variance 0 all data points must be equal. Thus, we only have to distinguish the two cases in which $\not\forall v \in V. \sigma^2(X_v) = 0$ is either satisfied or falsified after removing $x$: 1) $\not\forall v \in V. \sigma^2(X'_v) = 0$, and 2) $\forall v \in V. \sigma^2(X'_v) = 0$.
    
    If 1) holds, we know that both \textsc{LearnSPN}($X$) and \textsc{LearnSPN}($X'$) must choose to split uninformative variables, since there exist uninformative variables, i.e. $\exists v \in V. \sigma^2(X_v) = 0$, but not all variables are uninformative (see \Cref{fig:learnspn}). In case that 2) holds, \textsc{LearnSPN}($X'$) would choose \textsc{NF} instead, since no informative variables would be left. $rev$(\textsc{SU}) captures both cases and therefore, we know that the equivalence in \Cref{eq:th_revision} holds in this case.
    
    
    \textbf{Case \textsc{SplitData} (SD):}
    \textsc{LearnSPN}($X$) chooses to split data, if and only if $|V| > 1 \wedge \not\exists v \in V.\sigma^2(X_v) = 0	\wedge |X| > t \wedge clusters \wedge \neg independencies$.

    The number of variables is again unaffected. As already argued above, removing $x$ might yield a variance of 0 for some variables. Thus, it is possible that $\exists v \in V.\sigma^2(X'_v) = 0$. This implies that \textsc{LearnSPN}($X'$) would either choose a naive factorization or splitting uninformative variables, depending on whether all variables become uninformative. Both cases are covered by the first two cases of $rev$(\textsc{SD}).

    In the following, we assume $\not\exists v \in V.\sigma^2(X_v) = 0$ to hold after removing $x$. The condition $|X| > t$ can be falsified after removing $x$, if $|X| = t + 1 > t = |X'|$. In this case \textsc{LearnSPN}($X'$) would choose \textsc{NF} instead of \textsc{SD}, because $|X'|$ goes below the minimum number of required data points $t$. This is captured in the third case of $rev$(\textsc{SD}).
    
    If the $|X| > t$ still holds after removing $x$, it can still be the case that the $clusters$ condition is falsified afterward. This happens if removing $x$ results in a clustering, where all remaining data points belong to the same cluster. At this point, we have to further distinguish between the two cases where $\neg independencies$ is satisfied or falsified. If $\neg independencies$ is still satisfied after removing $x$, then we know that \textsc{LearnSPN}($X'$) would choose \textsc{NF}, since there exists more than one variable, no variables are uninformative, the dataset consists of at least $t+1$ data points and there neither exist independent variables nor clusters in the data. However, if removing $x$ induces the existence of independencies, then LearnSPN($X'$) would choose \textsc{SV} instead of \textsc{SD}. This will also be the case if $clusters$ is still satisfied after removing $x$. These three cases are all covered by the third to fifth case of $rev(\textsc{SD})$.
    
    If there still exist clusters after removing $x$, then the decision of \textsc{LearnSPN}($X'$) only depends on the existence of independencies. If there exist any independent variables, i.e. $independencies$ holds, then \textsc{LearnSPN}($X'$) would choose \textsc{SV} over \textsc{SD}. Otherwise, \textsc{LearnSPN}($X'$) would not revise the operation and stick with \textsc{SD}. This is captured by the last two cases in $rev$(\textsc{SD}), which therefore satisfies the equivalence in \Cref{eq:th_revision} in case of $op_{old} = \textsc{SD}$.
    

    \textbf{Case \textsc{Split Variables} (SV):}
    \textsc{LearnSPN}($X$) chooses to split variables, if and only if $|V| > 1 \wedge \not\exists v \in V.\sigma^2(X_v) = 0 \wedge |X| > t \wedge independencies$.
    
    The first three cases of $rev$(\textsc{SV}) are equivalent to $rev$(\textsc{SD}) (see above).
    In case that removing $x$ also removes independencies such that $\neg independencies$ holds, the decision of \textsc{LearnSPN}($X'$) depends on the existence of clusters. If no clusters exist, i.e. $\neg clusters$ holds, then \textsc{LearnSPN}($X'$) chooses \textsc{NF} (see \Cref{fig:learnspn}). On the other hand, \textsc{LearnSPN}($X'$) would choose \textsc{SD} over \textsc{SV}.
    If $independencies$ still holds after removing $x$, then \textsc{LearnSPN}($X'$) would also choose \textsc{SV}. This is captured by the last three cases of $rev$(\textsc{SV}), which therefore satisfies the equivalence in \Cref{eq:th_revision} in case of $op_{old} = \textsc{SV}$.\\\\
    In conclusion, we showed that for each of the five operations, the revision function $rev$ correctly captures all changes in choosing an operation in \textsc{LearnSPN} when removing a data point $x$ from the dataset $X$. Therefore \Cref{th:revision} holds in general.
\end{proof}

    \subsection{UnlearnSPN}
    Finally, we introduce \textsc{UnlearnSPN} -- an unlearning algorithm for certified removal enabled SPNs (\Cref{alg:unlearnspn}).

\begin{algorithm}
    \begin{algorithmic}[1]
        \REQUIRE{SPN $\Phi$, data point $x \in X$}
    	\ENSURE{SPN $\Phi$ without the influence of $x$}
    	\STATE $st \gets \textsc{State}(\Phi)$
    	\IF{$x \notin st.data$} \RETURN $\Phi$ \ENDIF
    	\STATE $op_{old} \gets st.operation$
    	\STATE $op_{new} \gets \textsc{ReviseOperation}(st, x)$
    	\IF{($op_{old} = $ \textsc{SD} $\wedge$ $op_{new} = $ \textsc{SV}) \\ $\vee$($op_{old} = $ \textsc{SV} $\wedge$ $op_{new} = $ \textsc{SD}) \\ $\vee$ ($op_{old} \in \{$\textsc{SV}, \textsc{SD}$\}$ $\wedge$ $op_{new} = $ \textsc{SU})\\ $\vee (op_{old} = \textsc{NF} \wedge op_{new} \in \{ \textsc{SD}, \textsc{SV} \})$}
    	\RETURN \textsc{LearnSPN}($st.data \setminus \{x\}$)
    	\ENDIF
    
    	\IF{$op_{old} \neq op_{new}$} \RETURN \textsc{NaiveFactorization}$(st.data \setminus \{x\})$ \ENDIF
        
    	\SWITCH{$op_{old}$}
        \STATE \textit{/* call unlearning algorithm corresponding to $op_{old}$ */}
    	\ENDSWITCH
    \end{algorithmic}
    \caption{\textsc{UnlearnSPN}}
    \label{alg:unlearnspn}
\end{algorithm}

The key idea behind \textsc{UnlearnSPN} is that we reevaluate the decisions made during training by using our revision function from \Cref{def:revision}. If the chosen operation does not change when removing the target data point $x$, we only have to update the state and parameters of the root node and proceed with the child nodes. If the chosen operation changes, we have to retrain the sub-SPN rooted at the current node. If $x$ is not present in the data of the current sub-SPN, then nothing has to be done. The conditions in lines 6-9 correspond to the cases in $rev$, where the input operation differs from the output operation. Note that lines 8-9 handle all cases where the new operation is a naive factorization. Thus, instead of calling \textsc{LearnSPN} we directly perform the naive factorization. If the old and new operation are the same, the updates that must be made depend on the operation that created the current sub-SPN.
We refer to these updates as \textsc{UnlearnCreateLeaf}, \textsc{UnlearnSplitData} and so forth (\Cref{alg:unlearnnaivefactorization,alg:unlearncreateleaf,alg:unlearnsplituninformative,alg:unlearnsplitdata,alg:unlearnsplitvariables}).


 \begin{algorithm}
     \begin{algorithmic}[1]
         \REQUIRE{SPN $\Phi$, data point $x \in X$}
     	\ENSURE{SPN $\Phi$ without the influence of $x$}
     	\STATE $st \gets $\textsc{State}($\Phi$)
     	\STATE $st.num\_data \gets st.num\_data - 1$
     	\STATE $st.data \gets st.data \setminus \{x\}$
     	\FORALL{$child \in st.children$} \STATE \textsc{UnlearnCreateLeaf}$(child, x)$ \ENDFOR
     	\IF {$st.all\_uninformative$} \RETURN $\Phi$ \ENDIF
     	\STATE $\sigma^2 \gets \textsc{Variances}(st.data)$
     	\IF{$\sigma^2_V = 0$ for all $V \in st.scope$}
     	\STATE $st.all\_uninformative \gets true$
     	\STATE $st.exist\_uninformative \gets true$
         \ENDIF
     	\RETURN $\Phi$
     \end{algorithmic}
 	\caption{\textsc{UnlearnNaiveFactorization}}
 	\label{alg:unlearnnaivefactorization}
 \end{algorithm}

 \begin{algorithm}
     \begin{algorithmic}
     	\REQUIRE{SPN $\Phi$, data point $x \in X$}
     	\ENSURE{SPN $\Phi$ without the influence of $x$}
     	\STATE $st \gets $\textsc{State}($\Phi$)
     	\STATE $st.num\_data \gets st.num\_data - 1$
     	\STATE $st.data \gets st.data \setminus \{x\}$
     	\STATE \textsc{UpdateUnivariateDistribution}($\Phi$, $x$)
     	\RETURN $\Phi$
     \end{algorithmic}
 	\caption{\textsc{UnlearnCreateLeaf}}
 	\label{alg:unlearncreateleaf}
 \end{algorithm}

 \begin{algorithm}
	\begin{algorithmic}
		\REQUIRE{SPN $\Phi$, data point $x \in X$}
		\ENSURE{SPN $\Phi$ without the influence of $x$}
		\STATE $st \gets $\textsc{State}($\Phi$)\;
		\STATE $st.num\_data \gets st.num\_data - 1$\;
		\STATE $st.data \gets st.data \setminus \{x\}$\;
		\FORALL{$child \in st.children$ where $child$ is a leaf node} \STATE \textsc{UnlearnCreateLeaf}$(child, x)$ \ENDFOR
		\STATE $\Psi \gets $ non-leaf child of $\Phi$
		\STATE $\sigma^2 \gets \textsc{Variances}(st.data)$
		\IF{$\sigma^2_V = 0$ for any $V \in \textnormal{State}(\Psi).scope$}
		\FORALL{$V \in \textsc{State}(\Psi).scope \text{ with } \sigma^2_V = 0$}
		\STATE $leaf \gets \textsc{CreateLeaf}(st.data, V)$
		\STATE \textsc{Append}($\Phi.children$, $leaf$)
		\ENDFOR
		\STATE{$\Psi \gets$ \textsc{LearnSPN}($st.data$ with remaining scope)}
		\ELSE
		\STATE $\Psi \gets$ \textsc{UnlearnSPN}($\Psi$, $x$)
		\ENDIF
		\RETURN $\Phi$
	\end{algorithmic}
	\caption{\textsc{UnlearnSplitUninformative}}
	\label{alg:unlearnsplituninformative}
\end{algorithm}

\begin{algorithm}
	\caption{\textsc{UnlearnSplitData}}
	\begin{algorithmic}
		\REQUIRE{SPN $\Phi$, data point $x \in X$}
		\ENSURE{SPN $\Phi$ without the influence of $x$}
		\STATE $st \gets $ \textsc{State}($\Phi$)
		\STATE $clustering_{old} \gets st.clustering$
		\STATE $clustering_{new} \gets $ \textsc{Remove}($clustering_{old}$, $x$)
		\IF{$clustering_{old} \neq clustering_{new}$} \RETURN \textsc{SplitData}($st.data \setminus \{x\}$) \ENDIF
		\STATE $st.num\_data \gets st.num\_data - 1$
		\STATE $st.data \gets st.data \setminus \{x\}$
		\STATE \textsc{UpdateWeights}($\Phi$, $x$)
		\STATE $\Psi \gets child \in \Phi.children$ with $x \in $ \textsc{State}$(child).data$
		\STATE $\Psi \gets$ \textsc{UnlearnSPN}($\Psi$, $x$)
		\RETURN $\Phi$
	\end{algorithmic}
	\label{alg:unlearnsplitdata}
\end{algorithm}

 \begin{algorithm}
	\caption{\textsc{UnlearnSplitVariables}}
	\begin{algorithmic}
		\REQUIRE{SPN $\Phi$, data point $x \in X$}
		\ENSURE{SPN $\Phi$ without the influence of $x$}
		\STATE $st \gets $ \textsc{State}($\Phi$)\;
		\STATE $variable\_split_{old} \gets st.variable\_split$\;
		\STATE $variable\_split_{new} \gets$ \textsc{Remove}($variable\_split_{old}$, $x$)\;
		\IF{$variable\_split_{old} \neq variable\_split_{new}$} \RETURN \textsc{SplitVariables}($st.data \setminus \{x\}$) \ENDIF
		\STATE $st.num\_data \gets st.num\_data - 1$\;
		\STATE $st.data \gets st.data \setminus \{x\}$\;
		\FORALL{$child \in \Phi.children$} \STATE $child \gets $\textsc{UnlearnSPN}($child$, $x$) \ENDFOR
		\RETURN $\Phi$
	\end{algorithmic}
	\label{alg:unlearnsplitvariables}
\end{algorithm}

\textsc{UnlearnNaiveFactorization} updates a node that was created with \textsc{NF}. First, the data point is removed from the data set. Then \textsc{UnlearnCreateLeaf} is called for all child nodes. Next, we have to ensure that the indicators for uninformative variables are also correctly updated. If all variables were already uninformative, we do not have to make any changes since removing data points cannot make variables informative again. Otherwise, we have to check if any or all variables have a variance of 0 and update the indicators accordingly.

\textsc{UnlearnCreateLeaf} updates a leaf node by simply removing $x$ and updating the univariate distribution such that it corresponds to the dataset without $x$.

\textsc{UnlearnSplitUninformative} updates a node that was created with \textsc{SU}. Again, we have to remove $x$ from the dataset first. For \textsc{SU} we know that all but one child are leaf nodes. All leaf nodes are updated by calling \textsc{UnlearnCreateLeaf}. Next, we have to check if removing $x$ induced further uninformative variables. If this is the case, we create new leaf nodes, estimate the univariate distributions and add them as new child nodes. We have to retrain the corresponding sub-SPN from scratch for the remaining informative variables by calling \textsc{LearnSPN}. In case no new uninformative variables are introduced when removing $x$, we process the non-leaf child by calling \textsc{UnlearnSPN} recursively.

\textsc{UnlearnSplitData} is the only algorithm that is used for updating sum nodes. Here, we remove $x$ from the clustering by either recomputing the clustering with the same initialization on the remaining data or by actually removing it if the clustering allows certified removal. If the clusters have changed due to removing $x$, we have to retrain the sub-SPN starting with a \textsc{SD} operation. Otherwise, we remove $x$ from the state, update the edge weights accordingly, and call \textsc{UnlearnSPN} for the child node that was trained on the data subset containing $x$.

\textsc{UnlearnSplitVariables} updates nodes that split variables by actually using the \textsc{SV} operation. Here, we remove $x$ from the independency analysis by either recomputing or updating if possible. If the independencies are affected by the data removal, we retrain the sub-SPN from scratch, starting with \textsc{SV}, since the variables split has changed. If the independencies are unaffected, we must remove $x$ from the state and call \textsc{UnlearnSPN} recursively for all child nodes.

Next, we show in three steps that \textsc{UnlearnSPN} is a $\epsilon$ certified removal algorithm with $\epsilon = 0$. For this we separately handle the cases where the operations chosen during training are preserved (\Cref{prop:unlearn=retraining}) or changed (\Cref{th:unlearn=retraining}) by \textsc{UnlearnSPN}.

\begin{proposition}\label{prop:unlearn=retraining}
    Let $X$ be a dataset, $x \in X$ a data point, $X' = X \setminus \{x\}$, $s$ a random seed, $\Phi=$\textsc{ LearnSPN}(X; s) an SPN and $\mathcal{T} \subseteq \mathcal{H}$ a hypothesis sub-space. If $\textsc{LearnSPN}(X; s)$ chooses operation $op$ and $\textsc{LearnSPN}(X'; s)$ chooses operation $op'$ with $op \neq op'$, then
    \begin{equation}
        P(\textsc{UnlearnSPN}(\Phi, x) \in \mathcal{T}) = P(\textsc{LearnSPN}(X') \in \mathcal{T}).
    \end{equation}
\end{proposition}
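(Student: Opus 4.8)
The plan is to reduce the statement to \Cref{th:revision} together with the observation that, once the root operation changes, \textsc{UnlearnSPN} discards the old sub-SPN and simply retrains it on $X'$, so that $\textsc{UnlearnSPN}(\Phi,x)$ becomes the same random object as $\textsc{LearnSPN}(X')$. First I would invoke \Cref{th:revision}: from the hypotheses that $\textsc{LearnSPN}(X;s)$ chooses $op$, that $\textsc{LearnSPN}(X';s)$ chooses $op'$, and that $op\neq op'$, we obtain $rev(op)=op'$. Hence the internal call $op_{new}\gets\textsc{ReviseOperation}(st,x)$ in \Cref{alg:unlearnspn} returns exactly $op'$ while $op_{old}=op$, so \textsc{UnlearnSPN} correctly recognizes that the root operation changes and enters one of its two retraining branches.

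Next I would split along these two branches. When the pair $(op_{old},op_{new})$ matches one of the conditions in lines~6--9, the procedure returns $\textsc{LearnSPN}(st.data\setminus\{x\})=\textsc{LearnSPN}(X')$ verbatim; since this is literally the same randomized algorithm run on the same input, the two output distributions coincide and $P(\textsc{UnlearnSPN}(\Phi,x)\in\mathcal{T})=P(\textsc{LearnSPN}(X')\in\mathcal{T})$ holds for every $\mathcal{T}$. In the complementary branch the procedure returns $\textsc{NaiveFactorization}(X')$; here the target operation is $op'=\textsc{NF}$, so by \Cref{th:revision} retraining also selects \textsc{NF} at the root and then performs exactly this naive factorization. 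Because \textsc{NaiveFactorization} is a deterministic function of its input dataset --- one product node with a single deterministically estimated univariate leaf per variable --- both procedures return the identical SPN, and the probability equality is immediate. Assembling the two cases yields the claim for all $\mathcal{T}\subseteq\mathcal{H}$.

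I expect the main obstacle to lie in the bookkeeping that these two branches are exhaustive and correctly labeled. Using the case list of $rev$ in \Cref{def:revision}, I would verify that every pair $(op,op')$ with $op\neq op'$ either triggers a line~6--9 condition or satisfies $op'=\textsc{NF}$, so that calling \textsc{NaiveFactorization} is genuinely the correct retraining action rather than an approximation. The delicate transition is $\textsc{NF}\to\textsc{SU}$: it must be routed to the full \textsc{LearnSPN} retrain, because $\textsc{SU}(X')$ in general splits off the newly uninformative variables and then builds further recursive structure on the still-informative ones, which a bare naive factorization would not reproduce. Checking that this transition is sent to the full-retrain branch --- and confirming that the seeded hypotheses serve only to trigger retraining via \Cref{th:revision}, after which the fresh $\textsc{LearnSPN}(X')$ call matches the retraining distribution on the right-hand side --- is where the argument needs the most care.
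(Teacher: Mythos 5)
Your overall route is the same as the paper's: use \Cref{th:revision} to identify $op'$ with $rev(op)$, observe that a changed root operation sends \textsc{UnlearnSPN} into one of the two retraining branches of \Cref{alg:unlearnspn}, and argue that each branch reproduces exactly what $\textsc{LearnSPN}(X')$ would compute on $X'$ under the shared seed. You have also put your finger on precisely the right delicate point, namely the transition $\textsc{NF}\to\textsc{SU}$.

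However, the verification you defer to --- that every pair $(op,op')$ with $op\neq op'$ either triggers the disjunction guarding the $\textsc{LearnSPN}(st.data\setminus\{x\})$ call or satisfies $op'=\textsc{NF}$ --- actually fails, so the proposal has a genuine gap at exactly the spot you flagged. That disjunction covers $op_{old}=\textsc{NF}$ only when $op_{new}\in\{\textsc{SD},\textsc{SV}\}$, and covers $op_{new}=\textsc{SU}$ only when $op_{old}\in\{\textsc{SD},\textsc{SV}\}$; the pair $(\textsc{NF},\textsc{SU})$ satisfies neither clause, so control falls through to the subsequent $op_{old}\neq op_{new}$ check and \textsc{UnlearnSPN} returns $\textsc{NaiveFactorization}(X')$. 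As you yourself observe, $\textsc{LearnSPN}(X')$ starting with \textsc{SU} instead produces a product root with leaves for the newly uninformative variables and a recursively built sub-SPN over the still-informative ones --- a structurally different, unpruned SPN --- so for a hypothesis sub-space $\mathcal{T}$ that separates the two structures the claimed probability equality is violated. This is not a defect you introduced: the paper's own proof asserts that the naive-factorization fallback fires exactly when the new operation is a naive factorization, which is false for $(\textsc{NF},\textsc{SU})$. Either the guard must be extended to route $(\textsc{NF},\textsc{SU})$ to the full $\textsc{LearnSPN}(X')$ retrain, or that case must be argued separately; as written, neither your proof nor the paper's closes it.
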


\begin{proof}[Proof of \Cref{prop:unlearn=retraining}]
From \Cref{th:revision}, we know that the revision function $rev$ is equivalent to the change of the chosen operation when removing a data point $x$, i.e. $rev(op) = op'$. For \textsc{UnlearnSPN}($\Phi, x$) we know that if $op \neq rev(op) = op'$, the SPN will be retrained from scratch calling \textsc{LearnSPN}($X'$) (see \Cref{alg:unlearnspn} lines 5-9). Note that in lines 8-9, we explicitly perform a naive factorization, since we know that the new operation that would be performed when calling \textsc{LearnSPN}($X'$) would be a naive factorization anyway. Therefore, we see that in all cases where $op \neq rev(op) = op'$ \textsc{UnlearnSPN} just calls \textsc{LearnSPN} and returns its result, which concludes that \textsc{UnlearnSPN}($\Phi, x$) = \textsc{LearnSPN}($X'$). This implies $P(\textsc{UnlearnSPN}(\Phi, x) \in \mathcal{T}) = P(\textsc{LearnSPN}(X') \in \mathcal{T})$ for any hypothesis sub-space $\mathcal{T}$.
\end{proof}

In \Cref{prop:unlearn=retraining}, we proved that updating a model via \textsc{UnlearnSPN} will result in the same SPN as retraining it from scratch on the remaining data if the chosen operations are different. For the retraining, we assume that the same random seed is used as in the original training; otherwise, the results would not be comparable.

In \Cref{th:unlearn=retraining}, we utilize our result from \Cref{prop:unlearn=retraining} and show that the result obtained via \textsc{UnlearnSPN} also could have been obtained via retraining with an equal chance in general.

\begin{theorem}\label{th:unlearn=retraining}
Let $X$ be a dataset, $x \in X$ a data point, $X' = X \setminus \{x\}$, $\Phi = $\textsc{ LearnSPN}(X) an SPN and $\mathcal{T} \subseteq \mathcal{H}$ a hypothesis sub-space, then $P(\textsc{UnlearnSPN}(\Phi, x) \in \mathcal{T}) = P(\textsc{LearnSPN}(X') \in \mathcal{T})$.
\end{theorem}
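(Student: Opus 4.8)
The plan is to condition on the random seed and reduce the probabilistic equality to a deterministic, structural statement. Following the convention already used in the proof of \Cref{th:revision}, I would fix a seed $s$ so that both \textsc{LearnSPN} and \textsc{UnlearnSPN} become deterministic, and prove the stronger claim that $\textsc{UnlearnSPN}(\Phi, x) = \textsc{LearnSPN}(X'; s)$ as SPNs (identical structure and identical parameters) for \emph{every} seed $s$. Once this pointwise identity is established, for any fixed $\mathcal{T} \subseteq \mathcal{H}$ the events $\{\textsc{UnlearnSPN}(\Phi,x) \in \mathcal{T}\}$ and $\{\textsc{LearnSPN}(X') \in \mathcal{T}\}$ coincide seed-by-seed, and averaging over the common seed distribution yields $P(\textsc{UnlearnSPN}(\Phi,x) \in \mathcal{T}) = P(\textsc{LearnSPN}(X') \in \mathcal{T})$, which is exactly the asserted $0$-CR guarantee.

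The core of the argument is a structural induction over the recursion tree of $\textsc{LearnSPN}(X; s)$ (equivalently, over the nodes of $\Phi$ from the root downward). At any node, let $op$ be the operation chosen by \textsc{LearnSPN} on its data and $op'$ the operation it would choose on the same data with $x$ removed; by \Cref{th:revision}, $op' = rev(op)$, which is precisely the value \textsc{ReviseOperation} returns in \Cref{alg:unlearnspn}. I would split into the two cases distinguished by \textsc{UnlearnSPN} itself. If $op \neq op'$, then the branch in lines 5--9 fires and, by \Cref{prop:unlearn=retraining}, \textsc{UnlearnSPN} returns exactly \textsc{LearnSPN} on the reduced data (with the naive-factorization shortcut in lines 8--9 producing the same tree \textsc{LearnSPN} would build), closing this subtree. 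If $op = op'$, the matching unlearning subroutine runs, updating the current node's state and recursing on the children; here I must verify, per operation, that the updated root node equals the root $\textsc{LearnSPN}(X'; s)$ builds and that the data handed to each recursive call is exactly the data $\textsc{LearnSPN}(X'; s)$ would pass to the corresponding child.

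The per-operation verification proceeds as follows. The base case is \textsc{CL}: \textsc{UnlearnCreateLeaf} re-estimates the univariate distribution from $st.data \setminus \{x\}$, which equals the leaf $\textsc{LearnSPN}(X')$ fits, since a univariate estimate depends only on the retained observations. For \textsc{NF} and \textsc{SU} the children whose scope stays uninformative are leaves updated by \textsc{UnlearnCreateLeaf}; the only additional work is recomputing the variance indicators, and \textsc{UnlearnSplitUninformative} additionally detaches any newly uninformative variable into a fresh leaf and retrains the residual informative block via \textsc{LearnSPN}, which again reproduces what $\textsc{LearnSPN}(X')$ does. For the preserved sum case \textsc{SD} and product case \textsc{SV}, when the partition (resp.\ variable split) is unchanged the data reaching the child that contained $x$ loses exactly $x$ while all sibling data is untouched (at a product node every child loses the row of $x$), so the inductive hypothesis applies child-by-child; the sum-node edge weights re-normalize to the same cluster-size ratios $\textsc{LearnSPN}(X')$ computes.

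The main obstacle I anticipate is the consistency of the incremental \textsc{Remove} updates at sum and product nodes with a from-scratch recomputation. Concretely, I must argue that $\textsc{Remove}(clustering_{old}, x)$ in \textsc{UnlearnSplitData} yields the same partition that $\textsc{LearnSPN}(X'; s)$ obtains by clustering $X'$ afresh, and likewise that the updated analysis in \textsc{UnlearnSplitVariables} reproduces the connected-component split on $X'$; this is where the standing assumption that the clustering and independence procedures are themselves certified-removal-enabled (or are re-run with the same seed) is indispensable, and it is what guarantees that the ``unchanged'' branch genuinely matches retraining rather than merely approximating it. A secondary point that must be pinned down is that the recursive sub-seeds are derived deterministically from tree position rather than from the data, so that the seeds used at a given child agree between the $\textsc{LearnSPN}(X; s)$ and $\textsc{LearnSPN}(X'; s)$ runs; granting this, the induction closes and, together with the seed-averaging step, establishes the theorem.
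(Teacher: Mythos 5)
Your proposal is correct and follows essentially the same route as the paper's proof: invoke \Cref{prop:unlearn=retraining} for the case where the chosen operation changes, and handle the preserved-operation case by structural induction with a per-operation verification (CL as base case, then NF, SU, SD, SV). Your explicit treatment of seed-averaging and of the consistency of the incremental \textsc{Remove} updates with a from-scratch recomputation is a welcome sharpening of points the paper leaves implicit, but it does not change the underlying argument.
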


\begin{proof}[Proof of \Cref{th:unlearn=retraining}]
From \Cref{prop:unlearn=retraining} we already know that $P(\textsc{UnlearnSPN}(\Phi, x) \in \mathcal{T}) = P(\textsc{LearnSPN}(X') \in \mathcal{T})$ holds, if $op \neq rev(op) = op'$, where $op$ is the operation chosen by $\textsc{LearnSPN}(X; s)$ and $op'$ the operation chosen by $\textsc{LearnSPN}(X'; s)$ for random seed $s$. Thus, we only have to prove the equation in \Cref{th:unlearn=retraining} in case of $op = op'$.
We prove this equality via structural induction over the structure of $\Phi$.

\textbf{Base case:} $\Phi$ only consists of a single node. This node must then be a leaf node and represent an univariate distribution over $X$. From $rev$(\textsc{CL}) = \textsc{CL}, we know that if \textsc{LearnSPN}($X$) creates a leaf node, so does \textsc{LearnSPN}($X'$). \textsc{UnlearnSPN} updates the univariate distribution of the leaf node such that $P(X) \mapsto P(X')$. Therefore, we know that \textsc{UnlearnSPN}($\Phi$, $x$) = \textsc{LearnSPN}($X'$), which implies $P(\textsc{UnlearnSPN}(\Phi, x) \in \mathcal{T}) = P(\textsc{LearnSPN}(X') \in \mathcal{T})$.
\\\\
\textbf{Induction hypothesis:} Let $\Phi$ be an SPN with subSPNs $\Phi_1, ..., \Phi_k$. $P(\textsc{UnlearnSPN}(\Phi_i, x) \in \mathcal{T})$ = $P(\textsc{LearnSPN}(X_i') \in \mathcal{T})$ holds for all $i \in [1, k]$, where $X_i'$ corresponds to the remaining data used for training $\Phi_i$.
\\\\
\textbf{Induction step:} In the following, we distinguish the four remaining operations that are used to build an SPN.

\textbf{Case \textsc{NaiveFactorization}:}
In case of a naive factorization, we know that all children must be leaf nodes. \textsc{UnlearnSPN} preserves the root and processes all children recursively. Since all children are leafs, we know from our base case that \textsc{UnlearnSPN} yields the same result as retraining. In conclusion, \textsc{UnlearnSPN}($\Phi$, $x$) = \textsc{LearnSPN($X'$)}, which implies $P(\textsc{UnlearnSPN}(\Phi, x) \in \mathcal{T}) = P(\textsc{LearnSPN}(X') \in \mathcal{T})$.

\textbf{Case \textsc{SplitUninformativeVariables}:}
In case of splitting uninformative variables, the SPN will consist of a product node as a root, where all but one child are leaf nodes. The non-leaf sub-SPN $\Psi$ is arbitrary.
\textsc{UnlearnSPN} preserves the product root node. We know that if a variable is uninformative in $\Phi$ it will still be uninformative after removing $x$. Therefore, \textsc{UnlearnSPN} preserves all leafs and updates them according to the base case. For the non-leaf sub-SPN we further distinguish two cases.

First, all informative variables stay informative. This means that the variable split is the same for \textsc{LearnSPN($X$)} and \textsc{LearnSPN($X'$)}. \textsc{UnlearnSPN($\Phi$, $x$)} will preserve the split as well. In this case \textsc{UnlearnSPN} processes the non-leaf sub-SPN $\Psi$ recursively. By the induction hypothesis, we assume that \textsc{UnlearnSPN}($\Psi$, $x$) = \textsc{LearnSPN}($X'_\Psi$), where $X'_\Psi$ corresponds to the remaining training data $X'$ in the remaining scope of the sub-SPN $\Psi$.

Second, we consider the case where some informative variables become uninformative. This adds $m < |scope(\Psi)| - 1$ new leaf nodes to the root. The corresponding distributions are estimated from scratch. For the remaining variables \textsc{UnlearnSPN} calls \textsc{LearnSPN}.
In conclusion, for both cases (split changes / is preserved), we can conclude that $P(\textsc{UnlearnSPN}(\Phi, x) \in \mathcal{T}) = P(\textsc{LearnSPN}(X') \in \mathcal{T})$ holds.

\textbf{Case \textsc{SplitData}:}
First, \textsc{UnlearnSPN} removes $x$ from the clustering. This is either done via recomputing the clustering with the same initialization or by updating it if possible.

If removing $x$ preserves the clusters, \textsc{UnlearnSPN} only adapts the weights of the edges and processes the sub-SPNs recursively. Since $x$ will only be part of one cluster, \textsc{UnlearnSPN} is only called for the sub-SPN $\Psi$ corresponding to the subset containing $x$. By the induction hypothesis, we assume that $P(\textsc{UnlearnSPN}(\Psi, x) \in \mathcal{T}) = P(\textsc{LearnSPN}(X'_\Psi) \in \mathcal{T})$, where $X'_\Psi$ corresponds to the remaining data used for training $\Psi$.

If removing $x$ changes the clusters, \textsc{UnlearnSPN} retrains the SPN from scratch on $X'$ starting with a split data operation.
In conclusion, $P(\textsc{UnlearnSPN}(\Phi, x) \in \mathcal{T}) = P(\textsc{LearnSPN}(X') \in \mathcal{T})$ in all cases.

\textbf{Case \textsc{Split Variables}:}
Let $V_1, ..., V_k$ be the variable partition and $\Phi_1, ..., \Phi_k$ the corresponding sub-SPNs. First, \textsc{UnlearnSPN} recomputes or updates the independencies if possible. If removing $x$ has no influence on the matrix, then \textsc{UnlearnSPN} processes all sub-SPNs recursively. By the induction hypothesis, we assume that \textsc{UnlearnSPN}($\Phi_i$, $x$) = \textsc{LearnSPN}($X'_i$), for all $i \in [1, k]$, where $X'_i$ corresponds to the remaining data in the scope of $\Phi_i$. If removing $x$ changes the adjacency matrix, then \textsc{UnlearnSPN} retrains the SPN from scratch starting with the split variable operation. In all cases $P(\textsc{UnlearnSPN}(\Phi, x) \in \mathcal{T}) = P(\textsc{LearnSPN}(X') \in \mathcal{T})$.
\\\\
\textbf{Conclusion:} If $rev(op) = op$, then $P(\textsc{UnlearnSPN}(\Phi, x) \in \mathcal{T}) = P(\textsc{LearnSPN}(X') \in \mathcal{T})$.
\\\\
The proof via structural induction for $op \neq rev(op) = op'$, together with \Cref{prop:unlearn=retraining} conclude that $P(\textsc{UnlearnSPN}(\Phi, x) \in \mathcal{T}) = P(\textsc{LearnSPN}(X') \in \mathcal{T})$ holds in general.
\end{proof}

From \Cref{th:unlearn=retraining} we can directly conclude \Cref{cor:0cr}, which states a strong privacy guarantee of \textsc{UnlearnSPN} in the sense of $\epsilon$-certified removal.

\begin{corollary}\label{cor:0cr}
Let $X$ be a dataset, $x \in X$ a data point, $X' = X \setminus \{x\}$, and $\Phi = \textsc{LearnSPN}(X)$.
\textsc{UnlearnSPN}$(\Phi, x)$ is 0-CR for $\mathcal{A}=$ \textsc{LearnSPN}.
\end{corollary}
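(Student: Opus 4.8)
The plan is to show that the defining inequality of $\epsilon$-certified removal in \Cref{def:cr} collapses, for $\epsilon = 0$, to an exact equality of probabilities, and that this equality is precisely what \Cref{th:unlearn=retraining} delivers. First I would instantiate \Cref{def:cr} with $\mathcal{A} = \textsc{LearnSPN}$, $\mathcal{U} = \textsc{UnlearnSPN}$, and $\epsilon = 0$. Since $e^{-0} = e^{0} = 1$, the two-sided bound $e^{-\epsilon} \le \rho \le e^{\epsilon}$ on the ratio $\rho = P(\textsc{UnlearnSPN}(\textsc{LearnSPN}(X), x) \in \mathcal{T}) \,/\, P(\textsc{LearnSPN}(X') \in \mathcal{T})$ becomes $1 \le \rho \le 1$, i.e. it holds if and only if numerator and denominator coincide for every $\mathcal{T} \subseteq \mathcal{H}$.

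Second, I would identify $\mathcal{U}(\mathcal{A}(X), x)$ with $\textsc{UnlearnSPN}(\Phi, x)$, where $\Phi = \textsc{LearnSPN}(X)$, and $\mathcal{A}(X \setminus \{x\})$ with $\textsc{LearnSPN}(X')$. With these identifications, \Cref{th:unlearn=retraining} states exactly that $P(\textsc{UnlearnSPN}(\Phi, x) \in \mathcal{T}) = P(\textsc{LearnSPN}(X') \in \mathcal{T})$. Substituting this equality into $\rho$ yields $\rho = 1$ whenever the denominator is nonzero, so $1 \le \rho \le 1$ is satisfied; in the degenerate case where the denominator vanishes, the numerator vanishes as well, so the bound is understood to hold trivially.

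Third, I would confirm that the universal quantification demanded by \Cref{def:cr} is matched. The definition requires the bound for all $\mathcal{T} \subseteq \mathcal{H}$, all $X \subseteq \mathcal{D}$, and all $x \in X$; since \Cref{th:unlearn=retraining} is established for an arbitrary dataset $X$, an arbitrary $x \in X$, and an arbitrary hypothesis sub-space $\mathcal{T}$, the equality---and hence the bound---holds for every admissible choice. This establishes that \textsc{UnlearnSPN} is $0$-CR for $\mathcal{A} = \textsc{LearnSPN}$.

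The proof carries essentially no obstacle of its own: all the analytic work was already done in \Cref{prop:unlearn=retraining} and \Cref{th:unlearn=retraining}, and the corollary is a direct specialization of the certified-removal definition at $\epsilon = 0$. The only point that warrants a moment of care is the bookkeeping of quantifiers---verifying that the probabilistic equality from \Cref{th:unlearn=retraining} genuinely holds for every $\mathcal{T}$, $X$, and $x$ rather than for a fixed instance---together with the benign treatment of the $0/0$ case when $P(\textsc{LearnSPN}(X') \in \mathcal{T}) = 0$.
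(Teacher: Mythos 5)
Your proposal is correct and follows essentially the same route as the paper's own proof, which likewise divides the equality from \Cref{th:unlearn=retraining} to obtain a ratio of $1 = e^0$ and sandwiches it between $e^{-0}$ and $e^{0}$. Your explicit handling of the degenerate $0/0$ case and the quantifier bookkeeping is slightly more careful than the paper's one-line chain of equivalences, but it is the same argument.
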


\begin{proof}[Proof of \Cref{cor:0cr}]
    \begin{align*}
        & ~ P(\textsc{UnlearnSPN}(\Phi, x) \in \mathcal{T}) = P(\textsc{LearnSPN}(X') \in \mathcal{T}) \\
        \Leftrightarrow & ~ \frac{P(\textsc{UnlearnSPN}(\Phi, x) \in \mathcal{T})}{P(\textsc{LearnSPN}(X \setminus \{x\}) \in \mathcal{T})} = 1 = e^{0} \\
        \Leftrightarrow & ~ e^0 \leq \frac{P(\textsc{UnlearnSPN}(\textsc{LearnSPN}(X), x) \in \mathcal{T})}{P(\textsc{LearnSPN}(X \setminus \{x\}) \in \mathcal{T})} \leq e^0
    \end{align*}
\end{proof}

\section{Experiments}\label{sec:experiments}
    In the following, we will empirically show that \textsc{UnlearnSPN} provides a better runtime than retraining an SPN from scratch and that the runtime of the here presented modified version of \textsc{LearnSPN} does not significantly differ from that of the original \textsc{LearnSPN}. All experiments were performed on a MacBook Pro with a 2.3 GHz Quad-Core Intel Core i7 processor and 32GB RAM. The implementation of our modified training algorithm, as well as \textsc{UnlearnSPN}, are based on SPFlow \cite{Molina2019SPFlow} and are publicly available on GitHub\footnote{\url{https://github.com/ROYALBEFF/UnlearnSPN}}.

For the former, we train an initial SPN on a random subset of 1000 data points of the training data and consecutively remove 100 data points at random afterward. Then, we compare the accumulated runtime of the 100 removal operations with that of retraining the model 100 times on the corresponding remaining data. We repeat this experiment 10 times and consider the mean as well as the standard deviation of the runtimes for comparison. The datasets used were Abalone, Adult, MSNBC, Plants, and Wine from the UCI Machine Learning Repository \cite{Dua:2019}. A short summary of the variables in each dataset is given in \Cref{tab:data}. All numerical variables were considered Gaussian.

\begin{table}
    \centering
    \caption{Number and types of variables in each dataset.}
    \begin{tabular}{|c|c|c|c|}\hline
         Dataset & \# Categorical & \# Gaussian & \# Total \\ \hline
         Abalone & 1 & 8 & 9 \\
         Adult & 9 & 6 & 15\\
         MSNBC & 17 & 0 & 17\\
         Plants & 71 & 0 & 71\\
         Wine & 1 & 13 & 14 \\ \hline
    \end{tabular}
    \label{tab:data}
\end{table}

Variables were split by computing the randomized dependency coefficient (RDC) \cite{lopez2013randomized} for each pair of variables, which gives us a symmetric coefficient matrix. Using a dependency threshold, we obtained an adjacency matrix. The connected components induced by this matrix form the variable split. Since RDC does not allow certified removal, we store the adjacency matrix and the random projections for each \textsc{SplitVariables} node, recompute the adjacency matrix and check if any of the connected components changed. If this is not the case, \textsc{UnlearnSPN} updates the state of the corresponding node. Otherwise, the sub-SPN must be retrained.

The data splits were computed using Q-$k$-Means \cite{ginart2019making} with $k=2$, which is a modified version of $k$-Means that allows certified data removal. The main idea behind Q-$k$-Means is that quantizing the cluster centroids reduces the impact each data point can have. Therefore, multiple data points can be removed without affecting the resulting centroids.

\begin{table}
    \centering
    \caption{Runtime comparison of removing 100 random data points via retraining and \textsc{UnlearnSPN}.}
    \begin{tabular}{|c|c|c|}
        \hline
        Dataset     & Retraining [s]    & Unlearning [s] \\ \hline
        Abalone     & 152.450 ± 10.217       & 111.540 ± 5.725 \\
        Adult       & 529.350 ± 64.865      & 466.950 ± 45.061 \\
        MSNBC       & 71.898 ± 19.101       & 55.636 ± 8.381\\
        Plants      & 6880.890 ± 605.628    & 6159.860 ± 609.329 \\
        Wine        & 0.249 ± 0.012         & 0.105 ± 0.003 \\ \hline
    \end{tabular}
    \label{tab:results}
\end{table}

\Cref{tab:results} contains the runtimes of unlearning and retraining for 100 random data points. We observe that in all of our experiments, removing data via unlearning is faster than retraining the model on the remaining data. The time improvements vary between approximately 58\% for the Wine dataset and 10\% for the Plants dataset. We also see that the time needed for unlearning varies less than for retraining.

\begin{table}
    \centering
    \caption{Comparing the training duration of the original \textsc{LearnSPN} and the modified, removal enabled \textsc{LearnSPN}.}
    \begin{tabular}{|c|c|c|}
        \hline
        Dataset     & Original \textsc{LearnSPN} [s]    & Modified \textsc{LearnSPN} [s] \\ \hline
        Abalone     & 1.463 ± 0.138     & 2.378 ± 0.247 \\
        Adult       & 4.7838 ± 0.783    & 6.238 ± 0.881 \\
        MSNBC       & 0.995 ± 0.171     & 1.578 ± 0.547 \\
        Plants      & 68.107 ± 7.927    & 71.312 ± 10.349 \\
        Wine        & 0.311 ± 0.013     & 0.0054 ± 0.001 \\ \hline
    \end{tabular}
    \label{tab:training_comparison}
\end{table}

For the runtime comparison of the original \textsc{LearnSPN} and our removal enabled version, we use the same datasets as above and perform both training algorithms 10 times on all data sets. The original \textsc{LearnSPN} algorithm uses RDC for splitting variables and $k$-Means with $k=2$ for splitting data points. Note that the SPFlow implementation of \textsc{LearnSPN} performs pruning by default. We explicitly omit the pruning step for a fairer comparison.
\Cref{tab:training_comparison} contains the runtimes of training an SPN with the original \textsc{LearnSPN} implementation in SPflow, and our modified, removal enabled version.
In \Cref{tab:training_comparison} we state our results of the runtime comparison, which show that the time needed to train an SPN using the removal enabled version of \textsc{LearnSPN} is only slightly longer than with the original \textsc{LearnSPN} algorithm. Other than the retraining and unlearning, the initial training is usually only performed once. Thus, the increase of the initial training duration amortizes after only a few data removals.
    
\section{Conclusion and Future Work}\label{sec:conclusion}
    We presented the first unlearning algorithm for SPNs, namely \textsc{UnlearnSPN}, which allows deleting the influence of specific data points perfectly on demand. With this, SPNs match the necessary requirements to comply with legal regulations such as the \textit{right to be forgotten} (GDPR). For this, we introduced a modified version of \textsc{LearnSPN}, which yields removal enabled SPNs. The major differences to the standard \textsc{LearnSPN} algorithm are the additional state that is stored for each node in the SPN and the fact that the SPN must not be pruned afterward. Our experiments showed that our modified version of \textsc{LearnSPN} slightly increases the training duration. However, the additional time needed in training is quickly amortized by the speed up \textsc{UnlearnSPN} provides over retraining the SPN from scratch when removing data points. Note that even though we only argued about removing single data points, this approach can easily be generalized to sets of data points and therefore allows batch removal. Furthermore, the approach can also be adapted so that data points can be removed and added to the SPN. This is especially interesting in online scenarios.

There are still some open questions that could be addressed in future work. Here, we assumed the leaf nodes to represent univariate distributions. However, leaf nodes are generally not restricted to univariate distribution but could also represent multivariate distribution or be replaced by Chow-Liu trees. \textsc{UnlearnSPN} could also be adapted to handle other kinds of leaf nodes and thus be applicable in even more scenarios.

Another point that could be addressed in future work is that \textsc{UnlearnSPN} only works on unpruned SPNs, which implies larger models and possibly slower inference. \textsc{UnlearnSPN} might be able to work on pruned SPNs as well if we find a way to combine states during pruning such that all information necessary for removal is preserved and applicable.

Finally, it would be interesting to compare unlearning to full differential private training since unlearning can be seen as a trade-off between retraining and differential privacy. However, to our best knowledge, there is no differential private training algorithm for SPNs yet.
    
\bibliography{main.bib}

\begin{thebibliography}{10}
\providecommand{\url}[1]{#1}
\csname url@samestyle\endcsname
\providecommand{\newblock}{\relax}
\providecommand{\bibinfo}[2]{#2}
\providecommand{\BIBentrySTDinterwordspacing}{\spaceskip=0pt\relax}
\providecommand{\BIBentryALTinterwordstretchfactor}{4}
\providecommand{\BIBentryALTinterwordspacing}{\spaceskip=\fontdimen2\font plus
\BIBentryALTinterwordstretchfactor\fontdimen3\font minus
  \fontdimen4\font\relax}
\providecommand{\BIBforeignlanguage}[2]{{%
\expandafter\ifx\csname l@#1\endcsname\relax
\typeout{** WARNING: IEEEtran.bst: No hyphenation pattern has been}%
\typeout{** loaded for the language `#1'. Using the pattern for}%
\typeout{** the default language instead.}%
\else
\language=\csname l@#1\endcsname
\fi
#2}}
\providecommand{\BIBdecl}{\relax}
\BIBdecl

\bibitem{bourtoule2021machine}
L.~Bourtoule, V.~Chandrasekaran, C.~A. Choquette-Choo, H.~Jia, A.~Travers,
  B.~Zhang, D.~Lie, and N.~Papernot, ``Machine unlearning,'' in \emph{2021 IEEE
  Symposium on Security and Privacy (SP)}.\hskip 1em plus 0.5em minus
  0.4em\relax IEEE, 2021, pp. 141--159.

\bibitem{golatkar2020eternal}
A.~Golatkar, A.~Achille, and S.~Soatto, ``Eternal sunshine of the spotless net:
  Selective forgetting in deep networks,'' in \emph{Proceedings of the IEEE/CVF
  Conference on Computer Vision and Pattern Recognition}, 2020, pp. 9304--9312.

\bibitem{golatkar2021mixed}
A.~Golatkar, A.~Achille, A.~Ravichandran, M.~Polito, and S.~Soatto,
  ``Mixed-privacy forgetting in deep networks,'' in \emph{Proceedings of the
  IEEE/CVF Conference on Computer Vision and Pattern Recognition}, 2021, pp.
  792--801.

\bibitem{graves2020amnesiac}
L.~Graves, V.~Nagisetty, and V.~Ganesh, ``Amnesiac machine learning,''
  \emph{arXiv preprint arXiv:2010.10981}, 2020.

\bibitem{guo2019certified}
C.~Guo, T.~Goldstein, A.~Hannun, and L.~Van Der~Maaten, ``Certified data
  removal from machine learning models,'' \emph{arXiv preprint
  arXiv:1911.03030}, 2019.

\bibitem{aldaghri2021coded}
N.~Aldaghri, H.~Mahdavifar, and A.~Beirami, ``Coded machine unlearning,''
  \emph{IEEE Access}, vol.~9, pp. 88\,137--88\,150, 2021.

\bibitem{brophy2021machine}
J.~Brophy and D.~Lowd, ``Machine unlearning for random forests,'' in
  \emph{International Conference on Machine Learning}.\hskip 1em plus 0.5em
  minus 0.4em\relax PMLR, 2021, pp. 1092--1104.

\bibitem{ginart2019making}
A.~Ginart, M.~Guan, G.~Valiant, and J.~Y. Zou, ``Making ai forget you: Data
  deletion in machine learning,'' \emph{Advances in Neural Information
  Processing Systems}, vol.~32, 2019.

\bibitem{dwork2014algorithmic}
C.~Dwork, A.~Roth \emph{et~al.}, ``The algorithmic foundations of differential
  privacy.'' \emph{Found. Trends Theor. Comput. Sci.}, vol.~9, no. 3-4, pp.
  211--407, 2014.

\bibitem{treiber2020cryptospn}
A.~Treiber, A.~Molina, C.~Weinert, T.~Schneider, and K.~Kersting, ``Cryptospn:
  Privacy-preserving sum-product network inference,'' \emph{arXiv preprint
  arXiv:2002.00801}, 2020.

\bibitem{poon2011sum}
H.~Poon and P.~Domingos, ``Sum-product networks: A new deep architecture,'' in
  \emph{2011 IEEE International Conference on Computer Vision Workshops (ICCV
  Workshops)}.\hskip 1em plus 0.5em minus 0.4em\relax IEEE, 2011, pp. 689--690.

\bibitem{gens2013learning}
R.~Gens and D.~Pedro, ``Learning the structure of sum-product networks,'' in
  \emph{International conference on machine learning}.\hskip 1em plus 0.5em
  minus 0.4em\relax PMLR, 2013, pp. 873--880.

\bibitem{Molina2019SPFlow}
A.~Molina, A.~Vergari, K.~Stelzner, R.~Peharz, P.~Subramani, N.~D. Mauro,
  P.~Poupart, and K.~Kersting, ``Spflow: An easy and extensible library for
  deep probabilistic learning using sum-product networks,'' 2019.

\bibitem{Dua:2019}
\BIBentryALTinterwordspacing
D.~Dua and C.~Graff, ``{UCI} machine learning repository,'' 2017. [Online].
  Available: \url{http://archive.ics.uci.edu/ml}
\BIBentrySTDinterwordspacing

\bibitem{lopez2013randomized}
D.~Lopez-Paz, P.~Hennig, and B.~Sch{\"o}lkopf, ``The randomized dependence
  coefficient,'' \emph{Advances in neural information processing systems},
  vol.~26, 2013.

\end{thebibliography}

\end{document}